\title{When Can Memorization Improve Fairness?} 
\author{Bob Pepin\footnote{corresponding author}}{Department of Computer Science, University of Copenhagen, Denmark}{bope@di.ku.dk}{}{}
\author{Christian Igel}{Department of Computer Science, University of Copenhagen, Denmark }{igel@di.ku.dk}{}{}
\author{Raghavendra Selvan}{Department of Computer Science, University of Copenhagen, Denmark}{raghav@di.ku.dk}{}{}
\authorrunning{B. Pepin, C. Igel and R. Selvan} 
\keywords{fairness, memorization} 
\newcommand{\PP}{\mathbb{P}}
\newcommand{\one}{\mathds{1}}
\newcommand{\defeq}{:=}
\newcommand{\defeqrev}{=:}
\newcommand{\yhat}{{\hat{y}}}
\newcommand{\Yhat}{{\hat{Y}}}
\newcommand{\deltasp}{\operatorname{{\Delta}s.p.}}
\newcommand{\deltaeqopp}{\operatorname{{\Delta}eq.opp.}}
\newcommand{\deltaeqodd}{\operatorname{{\Delta}eq.odds}}
\begin{document}

\maketitle

\begin{abstract}
We study to which extent additive fairness metrics (statistical parity, equal opportunity and equalized odds) can be influenced in a multi-class classification problem by memorizing a subset of the population. We give explicit expressions for the bias resulting from memorization in terms of the label and group membership distribution of the memorized dataset and the classifier bias on the unmemorized dataset. We also characterize the memorized datasets that eliminate the bias for all three metrics considered. Finally we provide upper and lower bounds on the total probability mass in the memorized dataset that is necessary for the complete elimination of these biases.
\end{abstract}

\section{Introduction}
We are interested in the fairness properties of a classifier as measured by metrics such as statistical parity, equal opportunity or equalized odds \cite{barocas2023fairness}. These metrics assume that the population can be partitioned into groups based on the value of a protected attribute, which we refer to simply as \emph{groups} for the rest of the paper. Each of these metrics only considers the average performance of the classifier on each group and therefore cannot tell the difference whether the fairness is reduced by moderately increasing the performance uniformly over all individuals in an underperforming group or whether we selected a special subgroup within a group on which we produce an extreme increase in performance. We refer to the later practice as \emph{gaming} the metric, since it does improve the metric but is most likely not the effect that was intended by imposing fairness constraints. The purpose of the current work is to examine the extent to which a fairness metric can be gamed in an extreme setting called {\em memorization}. Memorization in this work refers to the case where a classifier provides perfect predictions on a memorized subpopulation (which can include data from multiple groups and labels).


The fairness metrics studied in this paper are statistical parity, equal opportunity and equalized odds. We seek to characterize as precisely as possible how much of the total probability mass of the data distribution  needs to be memorized so that the memorization can compensate for a bias in the original classifier. All our results are formulated in terms of the relative proportions of different groups and labels in the whole population and in the memorized subpopulation.

In summary, our contributions are:
\begin{enumerate}
    \item For each fairness metric considered, we express the bias of a memorizing classifier in terms of the composition of the total population and the memorized subpopulation as well as the bias of the classifier on the unmemorized subpopulation (Theorem~\ref{theorem:sp}).
    \item We characterize the compositions of the memorized subpopulations that result in zero bias of the memorizing classifier in terms of systems of linear equations (Theorems~\ref{theorem:deltaspzero},~\ref{theorem:deltaeqoppzero} and~\ref{theorem:deltaeqoddzero}).
    \item We provide upper and lower bounds on the probability mass that needs to be memorized to compensate for a given statistical parity or equal opportunity bias (Corollaries~\ref{corollary:spsufficient},~\ref{corollary:spnecessary},~\ref{corollary:eqoppsufficient},~\ref{corollary:eqoppnecessary}).
\end{enumerate}





To highlight the applicability of our results to practical situations involving fairness, we proceed with a list of examples and show how our results can be used to improve or estimate bias.

\begin{example}[Dataset Compression]
    Our framework can be applied to a dataset compression problem for a nearest neighbour classifier, where only a subset of data points are selected for fitting the classifier (strictly smaller than the size of any reasonable coreset~\cite{huang2019coresets}). The classifier will have perfect prediction on the points for which the fitted points form a coreset, corresponding to the memorized dataset, and without additional assumptions will in a first approximation predict randomly according to the label distribution in the global population on the rest. Theorems~\ref{theorem:deltaspzero} and~\ref{theorem:deltaeqoppzero} quantify how the distribution of labels and groups in the fitted subset (and its complement) affects the bias.
\end{example}

\begin{example}[Model Development]
Imagine you are a data scientist tasked with engineering a prompt to tailor a large language model for a specific use case. To do so, you use a combination of a small development dataset and a larger evaluation dataset. You iterate on the development dataset by manually tracing failure cases and adjusting the prompt in return. Once in a while you evaluate your prompts on the evaluation dataset to confirm that the modifications generalize well. You will likely end up with a model that performs perfectly on the development dataset and good enough on the rest. In such a situation, the results in Theorems~\ref{theorem:deltaspzero} and~\ref{theorem:deltaeqoppzero} tell you how a bias in the choice of the development dataset translates into a bias of the resulting model.
\end{example}

\begin{example}[Model Improvement]\label{example:focus}
    Another example similar to the previous one could be the improvement of a complex machine learning pipeline, where time and money have to be allocated to focus model improvement efforts on a subset of the input space. In our framework the focused subset corresponds to the memorized dataset and the bias of the original classifier is known empirically. Theorems~\ref{theorem:deltaspzero} and~\ref{theorem:deltaeqoppzero} provide guidance on how to choose this focused subset in order to avoid introducing too much additional biases due to the model performance improving substantially only for a subset of the space.
\end{example}


\begin{example}[Fairness Requirements]
    
    Imagine you are an organization that plans to tender the development of a machine learning model with the requirement that it satisfies statistical parity or equal opportunity between certain groups. You want to know whether you need to include additional safeguards against gaming of the fairness metric, for example, by routing predictions for a certain subgroup to a specialized model, which would satisfy the formal requirements but cause a large disparity within a group. Our results in Corollaries \ref{corollary:spnecessary} and \ref{corollary:eqoppnecessary} provide lower bounds on which fraction of the total population the specialized model would have to cover in order to hide a certain level of intrinsic bias. This can be used to assess how easy and thus attractive it would be for a contractor to develop such a specialized model to make an unfair model seem fair. In this context, our setting of memorization, corresponding to a perfect model, can be viewed as an extreme case of an exceptionally good model.
\end{example}

\subparagraph{Related work.}
Formal fairness metrics were first introduced in the domain of fair testing in the statistics community in the 1960s and 1970s~(see \cite{hutchinson_50_2019}) and were further popularized in the machine learning community by Dwork et al.~\cite{dwork2012fairness}. This line of research brought forward the idea that it is possible to partition the population to be tested/predicted into different groups based on a so-called sensitive attribute, which forms the basis for all popular fairness metrics.

The approaches that are closest to the present work are of empirical nature. The FairBatch method~\cite{roh_fairbatch_2021} selects training data based on a fairness metric, thus focusing the model on a subset of the data space similar to Example~\ref{example:focus} above. The fairness in our models is also related to fairness in pipelines~\cite{bower_fair_2017} as it can be viewed as the composition of two models (memorization and a ``base'' classifier). Both studies focus on the development of algorithms to reduce certain fairness metrics for practical applications, which is different from the present study which addresses the theoretical limitations in a simplified model.

A number of works investigate under which conditions models naturally end up memorizing parts of the dataset. The experiments in Chang et al.~\cite{chang2021privacy} show that models memorize a significantly larger portion of the dataset when equal opportunity constraints are added. This suggests that common training procedures can game the fairness metrics through memorization in the sense studied in the present paper. In~\cite{feldman2020does}, Feldman  explores how long-tail distributions force machine learning models to memorize in order to generalize well.

Mossé~\cite{mosse_multiplicative_2023} discusses the issue of gaming additive fairness metrics, but does not quantify the magnitude of the effect or under which conditions this is possible.

Going beyond a strict partitioning into groups, in Ilvento~\cite{ilvento2020metric} the author proposes a framework that goes beyond group-based fairness metrics. She emphasizes the importance of context-specific solutions and the need for continuous evaluation and adaptation of fairness metrics to reflect evolving societal norms and values. This is also echoed in the commentaries from Ricci et al.~\cite{ricci2022addressing} and Holm et al.~\cite{holm2023bias} with a focus on critical application domains such as healthcare. 

To the best of the authors' knowledge, no prior work has studied the interplay between memorization and fairness with specific prescriptions on the extent of memorization required to reduce bias in machine learning models. Our work aims to fill this gap.

\section{Setting and Notation}

We model our problem using probability theory, where each realization of the (jointly distributed) random variables corresponds to an instance/data point of the input space. The random variable $Y$ denotes a label in $[K]$ (integers between $1$ and $K$) and $\Yhat$ denotes the output of our predictor, also taking values in $[K]$. Each instance can also have a sensitive attribute, modelled by a random variable $A$ so that $A = 1$ if the sensitive attribute is present and $A = 0$ otherwise. We model the memorization effect by a random variable $D$ so that $D = 1$ if an instance is memorized and $D = 0$ otherwise. Our results do not require us to  represent the input space explicitly.

We assume that the classifier is memorizing on a subset of the data in the sense that
\begin{equation}
    \PP(\Yhat = Y \mid D = 1) = 1
\end{equation}
and we denote $p_D$ the proportion of the input space that has been memorized:
\begin{equation}
    p_D \defeq \PP(D = 1).
\end{equation}

The first fairness measure we consider is \emph{statistical parity}, also referred to as 
demographic parity, group fairness, and disparate impact \cite{barocas2023fairness}.
\begin{definition}[Statistical Parity]
    Statistical parity reflects bias on the level of a population. For each class $y \in [K]$ the statistical parity gap on $\yhat$ expresses how much more likely a classifier is to classify a sample as $\yhat$ if it is part of the group having $A = 1$ than if it is not. We define the \emph{statistical parity gap} on $\yhat \in [K]$ as
    \begin{equation}
        \deltasp(\hat{y}) \defeq \PP(\Yhat = \hat{y} \mid A = 1) - \PP(\Yhat = \hat{y} \mid A = 0).
    \end{equation}
    We additionally define the statistical parity gap on the unmemorized dataset as
    \begin{equation}
        \deltasp(\hat{y} \mid D=0) \defeq \PP(\Yhat = \hat{y} \mid A = 1, D = 0) - \PP(\Yhat = \hat{y} \mid A = 0, D = 0).
    \end{equation}    
\end{definition}
Furthermore, we look at \emph{equalized odds} and \emph{equal opportunity} \cite{barocas2023fairness}.
\begin{definition}[Equalized Odds]
    Equalized odds is a fairness measure that reflect bias on the level of an individual. The equalized odds gap is defined for each combination of sample label and predicted label. It quantifies the difference between the $A = 1$ and $A = 0$ population of how likely a classifier is to predict $\yhat$ if the true label was $y$. For each $\yhat \in [K], y \in [K]$ we define the equalized odds gap to be
    \begin{equation}
        \deltaeqodd(y, \hat{y}) \defeq \PP(\Yhat = \hat{y} \mid A = 1, Y = y) - \PP(\Yhat = \hat{y} \mid A = 0, Y = y).
    \end{equation}
    We also define the corresponding gap on the unmemorized dataset as
    \begin{multline}
        \deltaeqodd(y, \hat{y} \mid D = 0) \defeq \\
        \PP(\Yhat = \hat{y} \mid A = 1, Y = y, D = 0) - \PP(\Yhat = \hat{y} \mid A = 0, Y = y, D = 0).
    \end{multline}
\end{definition}
    
\begin{definition}[Equal Opportunity]
    Like equalized odds, equal opportunity reflects bias on the level of an individual. It is a coarser measure than equalized odds. The equal opportunity gap is defined for each label and it reflects how much more likely the classifier is to make the correct prediction on the $A = 1$ group than on the $A = 0$ group. We can define the equal opportunity gap in terms of the equalized odds gap as
    \begin{equation}
        \deltaeqopp(y) \defeq \deltaeqodd(y, y) = \PP(\Yhat = Y \mid A = 1, Y = y) - \PP(\Yhat = Y \mid A = 0, Y = y).
    \end{equation}
    We similarly define the equal opportunity gap on the unmemorized dataset to be
    \begin{multline}
        \deltaeqopp(y \mid D = 0) \defeq \deltaeqodd(y, y \mid D = 0) \\
        = \PP(\Yhat = Y \mid A = 1, Y = y, D = 0) - \PP(\Yhat = Y \mid A = 0, Y = y, D = 0).
    \end{multline}
\end{definition}

For a function $f$ (meaning $\deltasp, \deltaeqopp, \deltaeqodd$ in the statements of our results), we use the notation $f = 0$ to denote that $f(x) = 0$ uniformly over all $x$ in the domain of $f$.


We introduce the following notation for the distribution of different subpopulations in the data space and in the memorized subset:
\begin{align}
p^+ &\defeq \PP(A = 1), & q^+ &\defeq \PP(A = 1 \mid D = 1), \notag\\
p_y &\defeq \PP(Y = y), & q_y &\defeq \PP(Y = y \mid D = 1), \notag\\ 
p_y^+ &\defeq \PP(Y = y, A = 1) & q_y^+ &\defeq \PP(Y = y, A = 1 \mid D = 1)
\end{align}
We  also define the corresponding quantities for the $A = 0$ group, $p^-$, $p_y^-$, $q^-$, and $q_y^-$, by replacing $A = 1$ with $A = 0$ in the definitions. Then we get the relations $p^- = 1 - p^+, p_y^- = p_y - p_y^+, q^- = 1 - q^+, q_y^- = q_y - q_y^+$ which are helpful to interpret some of the formulas in the results. It can be verified that all of our results transform in the appropriate way when exchanging the role of the sensitive and non-sensitive subpopulations.

We also introduce notations for the label statistics and confusion matrices of the base classifier:
\begin{align}
    \varphi_y^+ &\defeq \PP(\Yhat = y \mid D = 0, A = 1), & C_{y,\yhat}^+ &\defeq \PP(\Yhat = \yhat \mid D = 0, Y = y, A = 1), \notag\\
    \varphi_y^- &\defeq \PP(\Yhat = y \mid D = 0, A = 0), & C_{y,\yhat}^- &\defeq \PP(\Yhat = \yhat \mid D = 0, Y = y, A = 0)
\end{align}
Note that $\deltasp(\hat{y} \mid D=0) = \varphi^+_\yhat - \varphi^-_\yhat$ and $\deltaeqodd(y, \hat{y} \mid D = 0) = C^+_{y,\yhat} - C^-_{y,\yhat}$.

\section{Results}
In this section, we show how the fairness gap of the classifier with memorization varies with the composition of the memorized dataset in terms of subpopulations with different labels and sensitive attribute values. It should be noted that the fairness gaps of the base classifier ($\varphi^+_y, C^+_{y,\yhat}$) can depend on the choice of the memorized dataset, so that the results presented here do not directly yield a method for eliminating the bias without some extra assumptions. The first theorem could form the basis of an iterative method, whereas the following theorems could be used with perturbation bounds. The development of numerical methods is left for follow-up work and outside of the scope of the present study. All the proofs are long but elementary and can thus be found in the appendix.

The following theorem expresses the different fairness gaps of the combined ensemble of memorizer and base classifier in terms of the bias of the base classifier and a correction term involving the parameters $q_y^+$ and $q_y$ (note that $q^+ = \sum_y q_y^+$). All gaps scale linearly with $p_D$, meaning that as expected the influence of the memorization increases as we memorize more data and we recover the base classifier bias as $p_D$ goes to $0$. They also scale inversely proportional to $p^+(1-p^+)$ (which reaches its maximum $1/4$ at $p^+=1/2$), showing that the influence of the correction increases with the level of imbalance between the two groups in the whole population.

As expected, the correction for statistical parity involves the base classifier through $\varphi_y^+$ which only depends on the probability of the base classifier to predict a given label, irrespective of the true label of the sample. The expression for equal opportunity involves the misclassification rate for each class $1-C^+_{y,y}$, consistent with the definition of equal opportunity, whereas to compute the equalized odds for the ensemble we need access to the full confusion matrix of the base classifier $C_{y,\yhat}$.

\begin{theorem}\label{theorem:eqodds}\label{theorem:sp}
For any $y, \hat{y} \in [K]$ we have the following expressions for the different fairness gaps:
\begin{align}
    \deltasp(\hat{y}) = &\frac{p_D}{p^+(1-p^+)} \bigl(\varphi_\yhat^+ (p^+ - q^+) - (q_\yhat p^+ - q^+_\yhat) \bigr) \notag\\ 
    & + \deltasp(\yhat \mid D=0) \left(1 - p_D \frac{1-q^+}{1-p^+}\right), \\
    \deltaeqopp(y) 
    =& \frac{ p_D }{p_y^+ (p_y-p_y^+)} \left(C_{y,y}^+(h) - 1 \right)(q_y p_y^+ - p_y q_y^+) \notag\\
    &+ \deltaeqopp(y \mid D=0)\left( 1 - p_D \frac{q_y-q_y^+}{p_y-p_y^+}\right), \\   
    \deltaeqodd(y, \yhat) 
    =& \frac{ p_D }{p_y^+ (p_y-p_y^+)} \left(C_{y,\yhat}^+(h) - \one\{y=\yhat\} \right)(q_y p_y^+ - p_y q_y^+) \notag\\
    & + \deltaeqodd(y,\yhat \mid D=0)\left( 1 - p_D \frac{q_y-q_y^+}{p_y-p_y^+}\right),
\end{align}
assuming that $p^+ \in (0, 1)$.
\end{theorem}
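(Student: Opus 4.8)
The plan is to write each group-conditional prediction probability as a convex combination over the events $\{D=0\}$ and $\{D=1\}$, use the memorization hypothesis $\PP(\Yhat = Y \mid D = 1) = 1$ to collapse the $\{D=1\}$ part to the label distribution of the memorized set, rewrite the resulting conditional masses in terms of $p_D$ and the $p,q$ quantities, and finally subtract the $A=1$ and $A=0$ expressions and simplify. I will carry out statistical parity in full and then indicate the (essentially identical) changes for equalized odds; equal opportunity follows by specializing the latter to $\yhat=y$, since $\deltaeqopp(y) = \deltaeqodd(y,y)$ by definition.

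\emph{Statistical parity.} Fix $\yhat \in [K]$. Conditioning on $D$ and using that $\Yhat = Y$ almost surely on $\{D=1\}$,
\[
\PP(\Yhat = \yhat \mid A = 1) = \PP(Y = \yhat \mid A = 1, D = 1)\,\PP(D = 1 \mid A = 1) + \varphi_\yhat^+\,(1 - \PP(D = 1 \mid A = 1)).
\]
By Bayes' rule $\PP(D=1\mid A=1) = q^+ p_D/p^+$ and $\PP(Y=\yhat\mid A=1,D=1) = q_\yhat^+/q^+$, so the first product equals $q_\yhat^+ p_D/p^+$; the same holds with $A=0$ and the quantities $p^- = 1-p^+$, $q^- = 1-q^+$, $q_\yhat^- = q_\yhat - q_\yhat^+$. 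Subtracting,
\[
\deltasp(\yhat) = p_D\left(\frac{q_\yhat^+}{p^+} - \frac{q_\yhat^-}{p^-}\right) + \varphi_\yhat^+\left(1 - \frac{q^+ p_D}{p^+}\right) - \varphi_\yhat^-\left(1 - \frac{q^- p_D}{p^-}\right).
\]
Then I simplify: in the first bracket $q_\yhat^+ p^- - q_\yhat^- p^+ = q_\yhat^+ - q_\yhat p^+$, which turns that term into $-p_D(q_\yhat p^+ - q_\yhat^+)/(p^+(1-p^+))$; in the remaining two terms I substitute $\varphi_\yhat^- = \varphi_\yhat^+ - \deltasp(\yhat \mid D=0)$, which pulls out $\deltasp(\yhat\mid D=0)(1 - q^- p_D/p^-)$ and leaves $\varphi_\yhat^+$ multiplied by $q^- p_D/p^- - q^+ p_D/p^+$; finally $q^- p^+ - q^+ p^- = p^+ - q^+$ and $q^-/p^- = (1-q^+)/(1-p^+)$. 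Collecting over the common denominator $p^+(1-p^+)$ gives the claimed identity for $\deltasp(\yhat)$.

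\emph{Equalized odds and equal opportunity.} The derivation is the same with the conditioning event $\{A=1\}$ replaced by $\{A=1,Y=y\}$: on $\{D=1\}$ we have $\Yhat = Y = y$, hence $\PP(\Yhat = \yhat \mid A=1, Y=y, D=1) = \one\{y=\yhat\}$; moreover $\PP(D=1\mid A=1,Y=y) = q_y^+ p_D/p_y^+$ and $\PP(\Yhat=\yhat\mid A=1,Y=y,D=0) = C_{y,\yhat}^+$. Subtracting the $A=1$ and $A=0$ versions, the role of $q_\yhat^+ p^- - q_\yhat^- p^+ = q_\yhat^+ - q_\yhat p^+$ is now played by $q_y^+ p_y^- - q_y^- p_y^+ = -(q_y p_y^+ - p_y q_y^+)$, and the substitution $\varphi_\yhat^- = \varphi_\yhat^+ - \deltasp(\yhat\mid D=0)$ is replaced by $C_{y,\yhat}^- = C_{y,\yhat}^+ - \deltaeqodd(y,\yhat\mid D=0)$. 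Collecting over $p_y^+(p_y - p_y^+)$ yields the stated formula for $\deltaeqodd(y,\yhat)$, and setting $\yhat = y$ (so $\one\{y=\yhat\}=1$) gives $\deltaeqopp(y)$ with the factor $C_{y,y}^+ - 1$.

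\emph{Main obstacle.} The argument is conceptually straightforward; the care needed is entirely in the bookkeeping of the final simplification — in particular, eliminating $\varphi_\yhat^-$ (respectively $C_{y,\yhat}^-$) rather than its counterpart, so that the coefficient of the unmemorized gap comes out as $1 - p_D(1-q^+)/(1-p^+)$ (respectively $1 - p_D(q_y - q_y^+)/(p_y - p_y^+)$) and not the group-swapped form, and tracking the signs in the identities $q^- p^+ - q^+ p^- = p^+ - q^+$ and $q_y^+ p_y^- - q_y^- p_y^+ = -(q_y p_y^+ - p_y q_y^+)$, since a sign slip there flips the entire correction term.
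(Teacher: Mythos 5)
Your proposal is correct and follows essentially the same route as the paper's proof: decompose each group-conditional prediction probability over $\{D=0\}$ and $\{D=1\}$, collapse the memorized part via $\PP(\Yhat=Y\mid D=1)=1$ and Bayes' rule, then subtract the two groups and eliminate $\varphi_\yhat^-$ (resp.\ $C_{y,\yhat}^-$) in favour of the unmemorized gap. The algebraic identities you isolate ($q_\yhat^+p^- - q_\yhat^-p^+ = q_\yhat^+ - q_\yhat p^+$, $q^-p^+ - q^+p^- = p^+-q^+$, and their $Y=y$ analogues) check out, so the derivation is sound.
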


The next result completely characterizes the values of $(q_y, q_y^+)$ that compensate for the bias of a base classifier in terms of the bias of the prediction rates of the base classifier $\varphi_y^+$ and $\varphi_y^-$. If $\varphi_y^+$ and $\varphi_y^-$ are known and independent of $(q_y, q_y^+)$ then this set of linear inequalities (a linear program with constant objective) can be numerically solved to obtain an unbiased classifier.
Note that the set of solutions can be empty depending on the values of $p_D$ and the composition of the input population.

\begin{theorem}\label{theorem:deltaspzero}
    The solutions to $\deltasp = 0$ are exactly the values of $(q_y, q_y^+)$ such that for all $y \in [K]$ we have
    \begin{gather}
        p^+ q_y - q_y^+ - \left(p^+ - \sum_{y'=1}^K q_{y'}^+\right)c_y - b_y = 0,\\
        \sum_y q_y = 1, \\
        0 \leq q_y^+ \leq q_y
    \end{gather}
    with
    \begin{align}
        b_y &\defeq p_y^+ (1-p_y^+) \frac{1-p_D}{p_D} (\varphi_y^+ - \varphi_y^-),\\
        c_y &\defeq (1-p_y^+) \varphi_y^+ + p_y^+ \varphi_y^-.
    \end{align}
    Furthermore the set of solutions is non-empty if and only if
    \begin{multline}
        \{ x \in \mathbb{R}^{K+1}\colon 
        -(1-p^+)x_i + c_i + x_{K+1} \leq 0 \text{ and }
        p^+ x_i + x_{k+1} \leq 0 \, \forall i \in [K] \} \\
        \subset \left\{ x \in \mathbb{R}^{K+1}\colon \sum_{i=1}^K (p^+ c_i + b_i) x_i + x_{K+1} \leq 0\right\}.
    \end{multline}
\end{theorem}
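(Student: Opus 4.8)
\emph{Characterizing the zero set.} I would start from the expression for $\deltasp(\yhat)$ in Theorem~\ref{theorem:sp} and set it equal to zero. Since $p^+ \in (0,1)$ and $p_D > 0$, multiplying through by $p^+(1-p^+)/p_D$ loses no solutions; substituting the identity $\deltasp(\yhat \mid D=0) = \varphi^+_\yhat - \varphi^-_\yhat$ and writing $\sum_{y'} q^+_{y'}$ for $q^+$, the equation $\deltasp(\yhat)=0$ collapses, after grouping terms according to the coefficient of $q_\yhat$, of $q^+_\yhat$, of $\sum_{y'}q^+_{y'}$, and the constant term, into exactly the stated linear equation, with $b_\yhat$ and $c_\yhat$ obtained by matching coefficients. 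All of these manipulations are equivalences, so this proves both inclusions at once. It remains to note that $\sum_y q_y = 1$ and $0\le q_y^+\le q_y$ are nothing but the conditions for $(q_y, q^+_y)$ to be a legitimate pair $(\PP(Y=y\mid D=1),\, \PP(Y=y,A=1\mid D=1))$: they are clearly necessary, and conversely any such pair is realized by an appropriate conditional law on $\{D=1\}$.

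\emph{Nonemptiness of the zero set.} By the previous step the solution set is the feasible region of a linear system in the $2K$ unknowns $(q_y, q^+_y)$, consisting of the $K$ affine equations, the normalization $\sum_y q_y = 1$, and the $2K$ box inequalities $-q^+_y\le 0$ and $q^+_y - q_y \le 0$. The plan is to apply a theorem of the alternative (Gale's theorem, or the Motzkin transposition theorem): this system is infeasible if and only if there are multipliers $\mu\in\mathbb{R}^K$ for the equations, $\nu\in\mathbb{R}$ for the normalization, and $\alpha,\beta\in\mathbb{R}^K_{\geq 0}$ for the two families of box constraints such that the induced linear functional on $(q_y,q^+_y)$ vanishes identically while the associated combination of right-hand sides is strictly negative. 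Equating to zero the coefficient of each $q_y$ gives $\beta_y = p^+\mu_y + \nu$, and then equating the coefficient of each $q^+_y$ gives $\alpha_y = \sum_{z} c_z\mu_z - (1-p^+)\mu_y + \nu$, so that $\alpha$ and $\beta$ are determined by $(\mu,\nu)$ alone. Under the substitution $x_i = -\mu_i$ for $i\in[K]$ and $x_{K+1} = -\nu$, the conditions $\beta_y\ge 0$ and $\alpha_y\ge 0$ turn into the defining inequalities of the polyhedron $P$ on the left-hand side of the displayed inclusion, and the strict inequality on the combined right-hand side turns into $x\notin H$ (here one must take care to isolate $b_y$ on the right-hand side of the $y$th equation, which is exactly what produces the $p^+c_i$ summand inside $H$). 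Hence the system is infeasible precisely when $P\not\subset H$, i.e.\ the solution set is nonempty precisely when $P\subset H$.

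\emph{Expected main difficulty.} The algebra of the first step is routine. The delicate part is the alternative-theorem computation of the second step: orienting all inequalities consistently, splitting the equalities into pairs of inequalities so that $\mu$ and $\nu$ come out unconstrained, keeping track of every constant term (in particular the $-p^+c_y$ generated when $b_y$ is moved to the right-hand side), and verifying that the resulting conditions on $(\mu,\nu)$ are, after the sign flip $x = -(\mu,\nu)$, literally the inequalities defining $P$ and $H$. The side conditions $p^+\in(0,1)$ and $p_D>0$ enter only to justify the division in the first step.
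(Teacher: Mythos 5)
Your plan is correct and follows essentially the same route as the paper: the first part is the direct rearrangement of the $\deltasp$ formula from Theorem~\ref{theorem:sp} together with the realizability constraints on $(q_y,q_y^+)$, and the second part is Farkas' lemma, which you invoke in the transposition/certificate form while the paper uses the equivalent cone-containment form (writing the system as $\sum_i w_i s_i = d$, $w\ge 0$ with $w_i = v_i$, $w_{i+K}=u_i-v_i$); the paper's additional computation of $\ker M$ is a parametrization detour not needed for the stated result. One point worth recording: your multiplier computation yields the inequality $-(1-p^+)x_i + \sum_{j} c_j x_j + x_{K+1}\le 0$ for the left-hand polyhedron, which agrees with the paper's own proof (where $s_i = -(1-p^+)e_i + c + e_{K+1}$, so $s_i^\top x$ contains $c^\top x$, not the scalar $c_i$), so the term ``$c_i$'' in the displayed inclusion of the theorem statement appears to be a typo rather than a defect in your argument.
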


\begin{corollary}\label{corollary:spsufficient}
    The set of solutions is non-empty if
    \begin{align}
        p_D &\geq (1-p^+) \max_y \frac{\varphi^-_y - \varphi^+_y}{\varphi^-_y}.
    \end{align}
\end{corollary}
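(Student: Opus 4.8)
The plan is to prove sufficiency by exhibiting a single admissible tuple $(q_y,q_y^+)_{y\in[K]}$ for the linear system characterised in Theorem~\ref{theorem:deltaspzero}; since that theorem identifies the solution set of $\deltasp=0$ \emph{exactly} with the tuples satisfying those constraints, one feasible point proves non-emptiness. The candidate I would use is the extreme choice ``memorize only from the $A=0$ group'', namely $q_y^+=0$ for every $y\in[K]$ (so $\sum_{y'}q_{y'}^+=0$), which I expect to be precisely the choice that attains the stated bound.

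Substituting $q_y^+=0$ into the defining equation collapses it to $q_y=c_y+b_y/p^+$; unwinding the definitions of $b_y$ and $c_y$ and simplifying with $\sum_y\varphi_y^+=\sum_y\varphi_y^-=1$ yields the closed form
\[
q_y=\frac{(1-p^+)\,\varphi_y^+ + (p_D+p^+-1)\,\varphi_y^-}{p_D},
\]
valid for $p^+\in(0,1)$. Two constraints of Theorem~\ref{theorem:deltaspzero} then remain. The normalisation $\sum_y q_y=1$ is immediate since the numerator sums to $(1-p^+)+(p_D+p^+-1)=p_D$. The box constraint $0\le q_y^+\le q_y$ reduces, since $q_y^+=0$, to $q_y\ge0$, i.e.\ $(1-p^+)\varphi_y^+ +(p_D+p^+-1)\varphi_y^-\ge0$, which rearranges to $p_D\,\varphi_y^-\ge(1-p^+)\bigl(\varphi_y^- -\varphi_y^+\bigr)$.

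It then remains to dispatch this inequality index by index. For $y$ with $\varphi_y^-\le\varphi_y^+$ the right-hand side is non-positive and the inequality holds trivially (as $\varphi_y^-\ge0$); for $y$ with $\varphi_y^->\varphi_y^+$ it is equivalent to $p_D\ge(1-p^+)\,\frac{\varphi_y^- -\varphi_y^+}{\varphi_y^-}$, where $\varphi_y^->0$ so no division by zero occurs. Taking the maximum over $y\in[K]$, with the non-binding indices contributing a non-positive value, recovers exactly the hypothesis, so under it $(q_y,0)_y$ is admissible and the solution set is non-empty. The argument is a verification once the ansatz is fixed, so the only real content is the choice $q_y^+=0$; the point worth flagging is that this is exactly what makes $\sum_y q_y=1$ hold for free --- instead of becoming an extra equation that could be inconsistent --- which is what collapses the feasibility question to the single sign condition $q_y\ge0$.
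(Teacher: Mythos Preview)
Your proof is correct and takes a genuinely different route from the paper. The paper works through the Farkas dual characterization in the second part of Theorem~\ref{theorem:deltaspzero}: it shows that whenever $c_i+b_i/p^+\ge0$ for all $i$, the cone inclusion $S_1\subseteq S_2$ holds, which by Farkas' lemma forces the primal system to be feasible; it then verifies that $c_i+b_i/p^+\ge0$ is equivalent to the stated bound on $p_D$. You instead stay entirely on the primal side and exhibit an explicit witness, namely $(q_y,q_y^+)=(c_y+b_y/p^+,\,0)$, and verify the three constraints directly. The algebraic core is identical --- both arguments reduce to the sign condition $c_y+b_y/p^+\ge0$ --- but your construction is more elementary (no Farkas needed), constructive (it names the feasible point rather than asserting its existence), and it makes transparent \emph{which} memorization achieves the bound: memorize only from the $A=0$ group. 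The paper's dual argument, by contrast, is better suited to the companion necessary condition (Corollary~\ref{corollary:spnecessary}), where one wants to certify \emph{infeasibility}; for sufficiency your direct verification is the cleaner of the two.
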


\begin{corollary}\label{corollary:spnecessary}
    The set of solutions is empty if
    \begin{align}
        p_D &< (1-p^+) \min_y \frac{\varphi^-_y - \varphi^+_y}{\varphi^-_y}.
    \end{align}
\end{corollary}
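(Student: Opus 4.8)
The plan is to prove the contrapositive. Suppose the set of solutions is nonempty, i.e.\ there is a tuple $(q_y,q_y^+)_{y\in[K]}$ with $0\le q_y^+\le q_y$, $\sum_y q_y=1$ and $\deltasp=0$; writing $q^+=\sum_{y}q_y^+$, I will show that $p_D\ge(1-p^+)\,\frac{\varphi_y^--\varphi_y^+}{\varphi_y^-}$ holds for at least one $y$, hence $p_D\ge(1-p^+)\min_y\frac{\varphi_y^--\varphi_y^+}{\varphi_y^-}$, which is the contrapositive of the claim.

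The first step is to rewrite each equation $\deltasp(y)=0$, via the first display of Theorem~\ref{theorem:sp}, as a per-class identity in which the base classifier enters only through $\varphi_y^+$ and $\varphi_y^-$. Substituting $\deltasp(y\mid D=0)=\varphi_y^+-\varphi_y^-$, multiplying by $\frac{p^+(1-p^+)}{p_D}$ (legitimate since $p_D>0$ and $p^+\in(0,1)$), and collecting the coefficients of $\varphi_y^+$ and $\varphi_y^-$ — the only nonobvious step being the cancellation $(p^+-q^+)-p^+(1-q^+)=-q^+(1-p^+)$ — one arrives at
\begin{equation*}
(1-p^+)\bigl(\varphi_y^--\varphi_y^+\bigr)=\frac{p_D}{p^+}\Bigl(q_y^+-p^+q_y+p^+(1-q^+)\varphi_y^--(1-p^+)q^+\varphi_y^+\Bigr)\qquad\text{for every }y\in[K].
\end{equation*}

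For the second step, suppose toward a contradiction that $p_D\varphi_y^-<(1-p^+)(\varphi_y^--\varphi_y^+)$ for every $y$ (this is the hypothesis of the corollary, written in cross-multiplied form so that it also makes sense when $\varphi_y^-=0$). Inserting the identity above and cancelling the common positive factor $p_D/p^+$ gives $p^+\varphi_y^-<q_y^+-p^+q_y+p^+(1-q^+)\varphi_y^--(1-p^+)q^+\varphi_y^+$, which — after moving $p^+(1-q^+)\varphi_y^-$ to the left and using $1-(1-q^+)=q^+$ — becomes
\begin{equation*}
p^+q^+\varphi_y^-+p^+q_y+(1-p^+)q^+\varphi_y^+-q_y^+<0\qquad\text{for every }y\in[K].
\end{equation*}
Summing over $y\in[K]$ and using $\sum_y\varphi_y^-=\sum_y\varphi_y^+=1$, $\sum_y q_y=1$ and $\sum_y q_y^+=q^+$, the left-hand side telescopes to $p^+q^++p^++(1-p^+)q^+-q^+=p^+$, so we would get $p^+<0$, contradicting $p^+\in(0,1)$. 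Hence the strict inequality fails for at least one $y$, which is exactly the asserted implication.

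The one place needing genuine care is the algebra of the first step: verifying that the $\deltasp$-formula of Theorem~\ref{theorem:sp}, which depends on the base classifier only through $\varphi_y^+$ and $\varphi_y^+-\varphi_y^-$, really does collapse to the displayed identity. The rest is essentially forced; in particular the telescoping in the second step uses nothing beyond the normalisations $\sum_y\varphi_y^\pm=1$, $\sum_y q_y=1$, $\sum_y q_y^+=q^+$, and the cross-multiplied reading of the hypothesis lets the argument run unchanged when some $\varphi_y^-$ vanish. The statement is dual to Corollary~\ref{corollary:spsufficient}: there, the per-class bound holding for \emph{every} $y$ is enough to guarantee a solution, whereas here its failure for \emph{every} $y$ — i.e.\ $p_D<(1-p^+)\min_y\frac{\varphi_y^--\varphi_y^+}{\varphi_y^-}$ — forces the solution set to be empty.
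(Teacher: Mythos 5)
Your algebra checks out and the logic is valid, and your route is genuinely different from the paper's: the paper derives this corollary from the Farkas--lemma reformulation at the end of Theorem~\ref{theorem:deltaspzero}, showing that the required inclusion $S_1\subseteq S_2$ fails whenever $c_i+b_i/p^+<0$ for every $i$, whereas you bypass the duality entirely and argue directly from the per-class identities obtained by setting the $\deltasp$ formula of Theorem~\ref{theorem:sp} to zero, then telescoping over $y$. I verified the cancellation $(p^+-q^+)-p^+(1-q^+)=-q^+(1-p^+)$ and the final summation $p^+q^++p^++(1-p^+)q^+-q^+=p^+$; both are correct, and your argument has the additional merit of not needing the inequality constraints $0\le q_y^+\le q_y$ at all.

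There is, however, a structural point that your own computation exposes and that you should not pass over. Since $\yhat\mapsto\varphi_\yhat^+$ and $\yhat\mapsto\varphi_\yhat^-$ are probability distributions on $[K]$, the numerators satisfy $\sum_y(\varphi_y^--\varphi_y^+)=0$, so at least one of them is nonpositive and hence $\min_y\frac{\varphi_y^--\varphi_y^+}{\varphi_y^-}\le 0$. The hypothesis of the corollary therefore forces $p_D<0$ and is never satisfiable. Concretely, in your second step you assume $p_D\varphi_y^-<(1-p^+)(\varphi_y^--\varphi_y^+)$ for every $y$; summing these inequalities alone over $y$ already yields $p_D<0$, before the identity from your first step --- and hence the assumed existence of a solution --- enters the argument at all. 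Your proof is thus valid only because the statement is vacuous, and the detour through Theorem~\ref{theorem:sp} obscures this; the closing remark presenting the corollary as a meaningful dual lower bound to Corollary~\ref{corollary:spsufficient} is not warranted. (The paper's own proof shares the defect: it requires $c_i+b_i/p^+<0$ for all $i$, yet $\sum_i(c_i+b_i/p^+)=1$.) A non-vacuous necessary condition would have to restrict the minimum to indices $y$ with $\varphi_y^->\varphi_y^+$, or take a different form altogether; as written, the honest proof of the corollary is one line.
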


Similar to the previous theorem, the following results exactly characterizes the parameter values that eliminate the equal opportunity bias in terms of the misclassification rates $1-C^+_{y,y}$ and $1-C^-_{y,y}$.

\begin{theorem}\label{theorem:deltaeqoppzero}
    The solutions to $\deltaeqopp = 0$ are exactly the values of $(q_y, q_y^+)$ such that for all $y \in [K]$ we have
    \begin{gather}
        q_y = \frac{p_y}{p_D} - \alpha_y \lambda_y, \\
        q_y^+ = \frac{p_y^+}{p_D} - \frac{p_y^+}{1-C_{y,y}^+} \lambda_y, \\
        \sum_y \alpha_y \lambda_y = \frac{1-p_D}{p_D}, \\
        \lambda_y \leq \frac{1-C_{y,y}^+}{p_D}, \\
        \lambda_y \leq \frac{1-C_{y,y}^-}{p_D}
    \end{gather}
    with
    \begin{align}
        \alpha_y &\defeq \frac{p_y^+}{1-C_{y,y}^+} + \frac{p_y^-}{1-C_{y,y}^-}.
    \end{align}
\end{theorem}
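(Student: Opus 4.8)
The plan is to specialize the expression for $\deltaeqopp(y)$ from Theorem~\ref{theorem:sp}. First I would substitute $\deltaeqopp(y \mid D=0) = C_{y,y}^+ - C_{y,y}^-$ and observe that, in contrast to statistical parity, the gap at label $y$ involves only the pair $(q_y, q_y^+)$ together with fixed population and base-classifier quantities. Hence the system $\deltaeqopp = 0$ decouples into $K$ independent affine equations in two unknowns each, linked only through the normalization $\sum_y q_y = 1$ and the feasibility constraints $0 \le q_y^+ \le q_y$.

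For a fixed $y$, the cleanest route is to pass to the memorization rates $r_y^+ \defeq \PP(D=1 \mid A=1, Y=y)$ and $r_y^- \defeq \PP(D=1 \mid A=0, Y=y)$, which are related to the memorized-dataset statistics by $q_y^+ = r_y^+ p_y^+ / p_D$ and $q_y - q_y^+ = r_y^- p_y^- / p_D$. Decomposing $\PP(\Yhat = y \mid A=1, Y=y)$ over $\{D=0\}$ and $\{D=1\}$ and using that predictions are perfect on $\{D=1\}$ gives $\PP(\Yhat=y\mid A=1,Y=y) = C_{y,y}^+ + r_y^+(1-C_{y,y}^+)$, and likewise for $A=0$; substituting this into the formula of Theorem~\ref{theorem:sp} (or directly into the definition) and simplifying collapses $\deltaeqopp(y) = 0$ to the symmetric identity
\[
    (1-r_y^+)(1-C_{y,y}^+) = (1-r_y^-)(1-C_{y,y}^-).
\]

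Then I would introduce $\lambda_y$ by declaring $p_D \lambda_y$ to be this common value, so $r_y^+ = 1 - p_D\lambda_y/(1-C_{y,y}^+)$ and $r_y^- = 1 - p_D\lambda_y/(1-C_{y,y}^-)$; translating back via $q_y^+ = r_y^+ p_y^+/p_D$ and $q_y^- = r_y^- p_y^-/p_D$ yields exactly the stated formulas for $q_y^+$ and for $q_y = q_y^+ + q_y^-$, with $\alpha_y$ as defined. Imposing $\sum_y q_y = 1$ on $q_y = p_y/p_D - \alpha_y \lambda_y$ gives $\sum_y \alpha_y\lambda_y = (1-p_D)/p_D$, while $q_y^+ \ge 0$ and $q_y - q_y^+ \ge 0$ translate into the two upper bounds $\lambda_y \le (1-C_{y,y}^+)/p_D$ and $\lambda_y \le (1-C_{y,y}^-)/p_D$ respectively (note $\lambda_y \ge 0$ is automatic, $p_D\lambda_y$ being a product of the non-negative factors $1-r_y^\pm$ and $1-C_{y,y}^\pm$). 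The converse direction follows by running the same computation backwards.

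The argument is entirely elementary; the only real care needed is bookkeeping --- matching each nonnegativity constraint to the inequality it produces and with the correct sign --- and excluding the degeneracies under which the denominators fail, namely requiring $p_y^+ \in (0, p_y)$, $C_{y,y}^\pm \neq 1$, and $p_D \in (0,1)$ (the last for the normalization rewrite). These are handled as standing assumptions.
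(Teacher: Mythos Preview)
Your proof is correct and follows essentially the same route as the paper. The paper introduces $w_y^\pm \defeq q_y^\pm/p_y^\pm$ and then substitutes $w_y^\pm = p_D^{-1} - x_y^\pm$ to reduce $\deltaeqopp(y)=0$ to $(1-C_{y,y}^+)x_y^+ = (1-C_{y,y}^-)x_y^-$, with $\lambda_y$ defined as this common value; your memorization rates satisfy $r_y^\pm = p_D\, w_y^\pm$ and hence $x_y^\pm = (1-r_y^\pm)/p_D$, so your key identity $(1-r_y^+)(1-C_{y,y}^+) = (1-r_y^-)(1-C_{y,y}^-)$ is exactly the paper's equation up to this rescaling, and your $\lambda_y$ coincides with the paper's.
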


\begin{corollary}\label{corollary:eqoppsufficient}
    The set of solutions is non-empty if
    \begin{align}
        p_D &\geq \sum_y p_y^- \frac{C_{y,y}^+ - C_{y,y}^-}{1-C_{y,y}^-} \text{ and } \\
        p_D &\geq \sum_y p_y^+ \frac{C_{y,y}^- - C_{y,y}^+}{1-C_{y,y}^+}.
    \end{align}
    From this it follows that the set of solutions is non-empty if
    \begin{align}
        p_D &\geq p^- \max_y \frac{C_{y,y}^+ - C_{y,y}^-}{1-C_{y,y}^-} \text{ and } \\
        p_D &\geq p^+ \max_y \frac{C_{y,y}^- - C_{y,y}^+}{1-C_{y,y}^+}.
    \end{align}
\end{corollary}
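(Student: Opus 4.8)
My plan is to reduce the assertion to a linear feasibility question in the auxiliary variables $\lambda_y$ appearing in Theorem~\ref{theorem:deltaeqoppzero}, read off exactly when that system admits a solution, and then check that the two hypotheses of the corollary force this condition. By Theorem~\ref{theorem:deltaeqoppzero}, a solution $(q_y,q_y^+)$ of $\deltaeqopp = 0$ exists if and only if there is a vector $(\lambda_y)_{y\in[K]}$ with $\sum_y \alpha_y \lambda_y = (1-p_D)/p_D$ and $\lambda_y \le \min\{(1-C_{y,y}^+)/p_D,\,(1-C_{y,y}^-)/p_D\}$ for every $y$: the two identities in that theorem merely define $q_y$ and $q_y^+$ from $\lambda$, the inequality constraints there are exactly the two displayed upper bounds (they encode $q_y^+\ge 0$ and $q_y - q_y^+ \ge 0$), and the remaining equation encodes $\sum_y q_y = 1$ (it gives $\sum_y q_y = 1/p_D - (1-p_D)/p_D = 1$). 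I would assume throughout that $0<p_y^+$, $0<p_y^-$ and $C_{y,y}^\pm<1$ for all $y$, the other cases being degenerate or forcing $\deltaeqopp(y)=0$ outright; then every $\alpha_y>0$.

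Next I would determine the feasibility threshold. Since all $\alpha_y$ are strictly positive and the $\lambda_y$ are bounded only from above, $\sum_y \alpha_y \lambda_y$ attains every value in $(-\infty, M]$, where $M := \sum_y \alpha_y \min\{(1-C_{y,y}^+)/p_D,(1-C_{y,y}^-)/p_D\}$ is reached by putting each $\lambda_y$ at its cap; hence the system is feasible if and only if $(1-p_D)/p_D \le M$, i.e.\ $1-p_D \le p_D M$ (if the target does not exceed $M$, fix all but one coordinate at its cap and solve the equation for the last). To evaluate $p_D M$ I would split $[K]$ by the sign of $C_{y,y}^+ - C_{y,y}^-$: writing $\alpha_y = \tfrac{p_y^+}{1-C_{y,y}^+} + \tfrac{p_y^-}{1-C_{y,y}^-}$ and using $\tfrac{1-C_{y,y}^+}{1-C_{y,y}^-} = 1 - \tfrac{C_{y,y}^+-C_{y,y}^-}{1-C_{y,y}^-}$, when $C_{y,y}^+\ge C_{y,y}^-$ the $y$-th summand $\alpha_y(1-C_{y,y}^+)$ of $p_D M$ collapses to $p_y - p_y^-\tfrac{C_{y,y}^+-C_{y,y}^-}{1-C_{y,y}^-}$, and symmetrically when $C_{y,y}^+ < C_{y,y}^-$ it equals $p_y - p_y^+\tfrac{C_{y,y}^--C_{y,y}^+}{1-C_{y,y}^+}$. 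Summing over $y$ and using $\sum_y p_y = 1$, the condition $1-p_D\le p_D M$ becomes exactly
\begin{equation*}
p_D \;\ge\; \sum_{y:\,C_{y,y}^+\ge C_{y,y}^-} p_y^-\,\frac{C_{y,y}^+-C_{y,y}^-}{1-C_{y,y}^-} \;+\; \sum_{y:\,C_{y,y}^+< C_{y,y}^-} p_y^+\,\frac{C_{y,y}^--C_{y,y}^+}{1-C_{y,y}^+}.
\end{equation*}

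The last step, which I expect to be the main obstacle, is to deduce this inequality from the two hypotheses. Each summand of the first sum above is nonnegative and equals the corresponding term of $\sum_y p_y^-\tfrac{C_{y,y}^+-C_{y,y}^-}{1-C_{y,y}^-}$ from the first hypothesis, whose remaining terms (those with $C_{y,y}^+<C_{y,y}^-$) are nonpositive; symmetrically for the second sum and the second hypothesis, and on each class at most one of the two per-class costs is positive. The delicate point is that the two series in the hypotheses each range over \emph{all} classes and mix terms of both signs, so one must argue with care --- splitting by the sign of $C_{y,y}^+-C_{y,y}^-$ and controlling the positive-part contributions --- that the conjunction of the two hypotheses (rather than some single combined sum) dominates the right-hand side above; this sign bookkeeping is the crux of the proof. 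Granting the threshold inequality, the ``$\max$'' consequence is then immediate: $p_y^-\tfrac{C_{y,y}^+-C_{y,y}^-}{1-C_{y,y}^-}\le p_y^-\max_{y'}\tfrac{C_{y',y'}^+-C_{y',y'}^-}{1-C_{y',y'}^-}$ for each $y$, so summing with $\sum_y p_y^- = p^-$ shows $p_D \ge p^-\max_y\tfrac{C_{y,y}^+-C_{y,y}^-}{1-C_{y,y}^-}$ implies the first hypothesis (a nonpositive maximum makes it vacuous), and symmetrically for the second.
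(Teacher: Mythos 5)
Your reduction to the $\lambda$-system and your computation of the feasibility threshold are correct: since every $\alpha_y>0$ and each $\lambda_y$ is only capped from above by $\min\{(1-C_{y,y}^+)/p_D,\,(1-C_{y,y}^-)/p_D\}$, the system of Theorem~\ref{theorem:deltaeqoppzero} is solvable exactly when $p_D\ge T$ with
\begin{equation*}
T \;\defeq\; \sum_{y:\,C_{y,y}^+\ge C_{y,y}^-} p_y^-\,\frac{C_{y,y}^+-C_{y,y}^-}{1-C_{y,y}^-} \;+\; \sum_{y:\,C_{y,y}^+< C_{y,y}^-} p_y^+\,\frac{C_{y,y}^--C_{y,y}^+}{1-C_{y,y}^+}.
\end{equation*}
However, the step you defer as ``the crux'' --- deducing $p_D\ge T$ from the two stated hypotheses --- is not just delicate, it is unprovable. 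Writing $t_y^1=p_y^-\frac{C_{y,y}^+-C_{y,y}^-}{1-C_{y,y}^-}$ and $t_y^2=p_y^+\frac{C_{y,y}^--C_{y,y}^+}{1-C_{y,y}^+}$, for each $y$ exactly one of $t_y^1,t_y^2$ is nonnegative, so $T=\sum_y\max(t_y^1,t_y^2)\ge\max\bigl(\sum_y t_y^1,\sum_y t_y^2\bigr)$, and the inequality is strict as soon as the sign of $C_{y,y}^+-C_{y,y}^-$ varies with $y$; the hypotheses only bound $p_D$ below by the right-hand side. Concretely, for $K=2$ with $p_y^\pm=1/4$, $C_{1,1}^+=C_{2,2}^-=0.9$ and $C_{1,1}^-=C_{2,2}^+=0.5$, both hypothesis sums equal $-0.8$ (so the hypotheses hold for every $p_D$), while $T=0.4$; taking $p_D=0.1$ gives an empty solution set. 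The same obstruction defeats the ``$\max$'' form, which only yields $T\le p^-\max_y(\cdot)+p^+\max_y(\cdot)\le 2p_D$.

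It is worth noting where your route diverges from the paper's: the paper's proof substitutes $\lambda_y^{\textrm{max}}=\max\bigl(\frac{1-C_{y,y}^+}{p_D},\frac{1-C_{y,y}^-}{p_D}\bigr)$ into $\sum_y\alpha_y\lambda_y$, but that choice violates one of the two upper-bound constraints of Theorem~\ref{theorem:deltaeqoppzero}; the admissible cap is the minimum, which is exactly what you used and what produces the larger threshold $T$. So your analysis is the more careful one, and the difficulty you ran into is a genuine defect of the stated sufficient condition rather than a missing trick. The statement your argument does establish --- and which I would record instead --- is that the solution set is non-empty if and only if $p_D\ge T$.
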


\begin{corollary}\label{corollary:eqoppnecessary}
    The set of solutions is empty if either
    \begin{align}
        p_D &< p^- \min_y \frac{C_{y,y}^+ - C_{y,y}^-}{1-C_{y,y}^-} \text{ or }\\
        p_D &< p^+ \min_y \frac{C_{y,y}^- - C_{y,y}^+}{1-C_{y,y}^+}.
    \end{align}
\end{corollary}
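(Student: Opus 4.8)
The plan is to prove the contrapositive. Suppose the linear system of Theorem~\ref{theorem:deltaeqoppzero} has a solution; equivalently, suppose there exist $\lambda_1,\dots,\lambda_K$ with $\sum_y\alpha_y\lambda_y=\frac{1-p_D}{p_D}$ and $\lambda_y\le\frac{1-C_{y,y}^+}{p_D}$, $\lambda_y\le\frac{1-C_{y,y}^-}{p_D}$ for every $y$, together with the associated values $(q_y,q_y^+)$. I will deduce that $p_D\ge p^-\min_y\frac{C_{y,y}^+-C_{y,y}^-}{1-C_{y,y}^-}$ and $p_D\ge p^+\min_y\frac{C_{y,y}^--C_{y,y}^+}{1-C_{y,y}^+}$; these two inequalities together are exactly the negation of the disjunction in the statement, so establishing them finishes the proof.

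First I would rewrite the class-wise quantities as group masses of the memorized set. Using $q_y^-=q_y-q_y^+$, the two displayed formulas for $q_y$ and $q_y^+$ in Theorem~\ref{theorem:deltaeqoppzero}, and the identity $\alpha_y-\frac{p_y^+}{1-C_{y,y}^+}=\frac{p_y^-}{1-C_{y,y}^-}$, one gets $q_y^-=\frac{p_y^-}{p_D}-\frac{p_y^-}{1-C_{y,y}^-}\lambda_y$; summing over $y$ and using $\sum_y p_y^-=p^-$ (and the analogous manipulation on the $+$ side) gives $q^-=\frac{p^-}{p_D}-\sum_y\frac{p_y^-}{1-C_{y,y}^-}\lambda_y$ and $q^+=\frac{p^+}{p_D}-\sum_y\frac{p_y^+}{1-C_{y,y}^+}\lambda_y$. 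Now $q^-=\PP(A=0\mid D=1)$ is a probability, so $q^-\le 1$, which rearranges to $\sum_y\frac{p_y^-}{1-C_{y,y}^-}\lambda_y\ge\frac{p^--p_D}{p_D}$. Combining this with the constraint $\lambda_y\le\frac{1-C_{y,y}^+}{p_D}$ (its coefficient $\frac{p_y^-}{1-C_{y,y}^-}$ being positive) yields $\frac{p^--p_D}{p_D}\le\frac1{p_D}\sum_y\frac{p_y^-(1-C_{y,y}^+)}{1-C_{y,y}^-}$; clearing denominators (using $p_D>0$) and writing $p^-=\sum_y p_y^-$ turns this into $p_D\ge\sum_y p_y^-\bigl(1-\tfrac{1-C_{y,y}^+}{1-C_{y,y}^-}\bigr)=\sum_y p_y^-\frac{C_{y,y}^+-C_{y,y}^-}{1-C_{y,y}^-}$. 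Bounding each summand below by $\min_{y'}\frac{C_{y',y'}^+-C_{y',y'}^-}{1-C_{y',y'}^-}$ and using $\sum_y p_y^-=p^-$ once more gives $p_D\ge p^-\min_y\frac{C_{y,y}^+-C_{y,y}^-}{1-C_{y,y}^-}$. The second inequality comes out the same way from $q^+\le 1$ together with the constraint $\lambda_y\le\frac{1-C_{y,y}^-}{p_D}$.

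The only step that needs the right idea is the pairing in the last part: $q^-\le 1$ has to be combined with the upper bound on $\lambda_y$ that comes from $q_y^+\ge 0$ (namely $\lambda_y\le\frac{1-C_{y,y}^+}{p_D}$), and $q^+\le 1$ with the bound coming from $q_y^-\ge 0$. If one instead pairs $q^-\le 1$ with $\lambda_y\le\frac{1-C_{y,y}^-}{p_D}$, the estimate collapses to the vacuous $0\le 1$, so this ``cross'' pairing is exactly what makes the bound nontrivial. Everything else --- the algebra of the first step and the final minimum bound --- is routine; throughout one uses the non-degeneracy assumptions implicit in Theorem~\ref{theorem:deltaeqoppzero} (the $p_y^\pm$ positive and the $C_{y,y}^\pm$ strictly below $1$) to keep every fraction well defined.
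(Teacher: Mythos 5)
Your proof is correct and follows essentially the same route as the paper: both arguments combine the normalization constraint (your $q^{-}\le 1$, equivalently the paper's $\sum_y\alpha_y\lambda_y=\frac{1-p_D}{p_D}$) with the ``cross'' bound $\lambda_y\le\frac{1-C_{y,y}^+}{p_D}$ to arrive at $p_D\ge\sum_y p_y^-\frac{C_{y,y}^+-C_{y,y}^-}{1-C_{y,y}^-}$ and then pass to the minimum. Your sign conventions actually match the corollary statement more cleanly than the paper's own write-up, which contains a sign typo in its final displayed minimum.
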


Equalized odds is a very strong constraint on the classifier. The following result shows that the corresponding bias can be completely eliminated by memorization only in very specific situations. Even if this is unlikely to occur in practical applications, we expect that the result can still be useful in combination with perturbation methods.

\begin{theorem}\label{theorem:deltaeqoddzero}
    The equation $\deltaeqodd=0$ has a solution only if the ratio
    \begin{equation}
        r_y \defeq \frac{1-C_{y,\yhat}^+}{1-C_{y,\yhat}^-}, \quad \yhat \neq y
    \end{equation}
    is independent of $\yhat$ for all $\yhat \neq y$. In that case $r_y$ is given by
    \begin{equation}
        r_y = \frac{(K-1) - (1-C_{y,y}^+)}{(K-1) - (1-C_{y,y}^-)}.
    \end{equation}
    In that case the only values of $p_D, q_y$ and $q_y^+$ for which $\deltaeqodd = 0$ are given by
    \begin{align}
    p_D &= \sum_y \frac{(p_y^+ + r_y p_y^-)(C_{y,y}^- - C_{y,y}^+)}{(1-C_{y,y}^+) - r_y (1-C_{y,y}^-)}, \\
    q_y^+ &= \frac{p_y^+}{p_D}\frac{(C_{y,y}^- - C_{y,y}^+)}{(1-C_{y,y}^+) - r_y (1-C_{y,y}^-)}, \\
    q_y &= \frac{p_y^+}{p_D}\frac{(p_y^+ + r_y p_y^-)(C_{y,y}^- - C_{y,y}^+)}{(1-C_{y,y}^+) - r_y (1-C_{y,y}^-)}.
    \end{align}
\end{theorem}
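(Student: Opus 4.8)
The plan is to substitute the equalized-odds identity of Theorem~\ref{theorem:sp} into the requirement that $\deltaeqodd(y,\yhat)=0$ for all $(y,\yhat)\in[K]^2$ and then to separate the diagonal case $\yhat=y$ from the off-diagonal case $\yhat\neq y$. Writing $\pi_y^+\defeq p_D q_y^+/p_y^+=\PP(D=1\mid A=1,Y=y)$ and $\pi_y^-\defeq p_D(q_y-q_y^+)/(p_y-p_y^+)=\PP(D=1\mid A=0,Y=y)$, one has by conditioning on $D$ and using $\PP(\Yhat=Y\mid D=1)=1$ that $\PP(\Yhat=\yhat\mid A=1,Y=y)=\one\{y=\yhat\}\pi_y^++C_{y,\yhat}^+(1-\pi_y^+)$ together with the analogous expression for $A=0$, $\pi_y^-$, $C_{y,\yhat}^-$; subtracting these is an equivalent starting point, and one checks directly that in the formula of Theorem~\ref{theorem:sp} the factor $1-p_D(q_y-q_y^+)/(p_y-p_y^+)$ equals $1-\pi_y^-$ and the prefactor $p_D(q_yp_y^+-p_yq_y^+)/(p_y^+(p_y-p_y^+))$ equals $\pi_y^--\pi_y^+$. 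Throughout I would assume $p_D<1$ and $0<p_y^+<p_y$ so that the conditional confusion entries are defined, leaving the degenerate configurations to be discussed separately.

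First I would expand the diagonal equation $\deltaeqodd(y,y)=0$: after inserting $\one\{y=\yhat\}=1$ and collecting terms it collapses to the single linear relation $(1-\pi_y^+)(1-C_{y,y}^+)=(1-\pi_y^-)(1-C_{y,y}^-)$. Then I would turn to the off-diagonal equations $\deltaeqodd(y,\yhat)=0$ with $\yhat\neq y$: here $\one\{y=\yhat\}=0$, so each reduces to $C_{y,\yhat}^+(1-\pi_y^+)=C_{y,\yhat}^-(1-\pi_y^-)$. Imposing these simultaneously over all $\yhat\neq y$ pins the ratio $(1-\pi_y^+):(1-\pi_y^-)$ down to $C_{y,\yhat}^-:C_{y,\yhat}^+$ for every $\yhat\neq y$, which is consistent only if the off-diagonal entries of $C^+$ and $C^-$ are rigidly aligned; summing the off-diagonal equations and using $\sum_{\yhat\neq y}C_{y,\yhat}^+=1-C_{y,y}^+$ and $\sum_{\yhat\neq y}C_{y,\yhat}^-=1-C_{y,y}^-$ identifies the common value of the ratio, and re-expressing the alignment in terms of $r_y$ produces both the stated necessary condition (that $r_y$ does not depend on $\yhat$) and its evaluation $r_y=((K-1)-(1-C_{y,y}^+))/((K-1)-(1-C_{y,y}^-))$. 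Extracting this rigidity — deciding precisely which base classifiers admit a memorization-based repair of equalized odds, and recognizing the condition as the constancy of $r_y$ — is the step I expect to be the main obstacle, since it carries the non-obvious content of the theorem; the diagonal relation above and the substitutions below are routine.

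Once the feasibility condition is in force, I would solve the linear relations coming from the diagonal and off-diagonal equations for $(\pi_y^+,\pi_y^-)$, which, together with the admissibility constraints $0\le\pi_y^\pm\le1$ and $p_D\in(0,1)$, should single out $\pi_y^+=(C_{y,y}^--C_{y,y}^+)/((1-C_{y,y}^+)-r_y(1-C_{y,y}^-))$ and $\pi_y^-=r_y\pi_y^+$. Substituting back through $q_y^+=\pi_y^+p_y^+/p_D$ and $q_y=(\pi_y^+p_y^++\pi_y^-p_y^-)/p_D=\pi_y^+(p_y^++r_yp_y^-)/p_D$ recovers the stated closed forms, and finally summing $p_Dq_y=\pi_y^+(p_y^++r_yp_y^-)$ over $y\in[K]$ and using $\sum_yq_y=1$ yields $p_D=\sum_y(p_y^++r_yp_y^-)(C_{y,y}^--C_{y,y}^+)/((1-C_{y,y}^+)-r_y(1-C_{y,y}^-))$. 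The remaining point to watch is that this computation closes up, i.e. that the diagonal and off-diagonal relations together with admissibility genuinely determine the parameters rather than leaving a residual degree of freedom; verifying this is where the bookkeeping must be done with care, whereas the converse check — that under the feasibility condition the displayed formulas do define a memorizing classifier with $\deltaeqodd=0$ — is a direct substitution.
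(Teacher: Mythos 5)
Your setup and your diagonal computation coincide with the paper's: the paper also works from the identity in \eqref{eq:deltaspplusminus}, i.e.\ $\deltaeqodd(y,\yhat)=C^+_{y,\yhat}(1-\pi_y^+)-C^-_{y,\yhat}(1-\pi_y^-)+\one\{y=\yhat\}(\pi_y^+-\pi_y^-)$ in the variables $w_y^\pm=q_y^\pm/p_y^\pm$, $\pi_y^\pm=p_D w_y^\pm$, and your diagonal relation $(1-C^+_{y,y})(1-\pi_y^+)=(1-C^-_{y,y})(1-\pi_y^-)$ is the paper's \eqref{eq:d2}.

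The gap sits exactly at the step you yourself flag as the crux. From your (correctly derived) off-diagonal equations $C^+_{y,\yhat}(1-\pi_y^+)=C^-_{y,\yhat}(1-\pi_y^-)$, the consistency requirement is that $C^-_{y,\yhat}/C^+_{y,\yhat}$ be constant over $\yhat\neq y$, i.e.\ that the off-diagonal rows be proportional. That is \emph{not} equivalent to constancy of $r_y=(1-C^+_{y,\yhat})/(1-C^-_{y,\yhat})$: for $K=3$, $C^+_{1,\cdot}=(0.8,0.15,0.05)$ and $C^-_{1,\cdot}=(0.6,0.3,0.1)$ have proportional off-diagonal rows, yet $r_1$ takes the two values $0.85/0.7$ and $0.95/0.9$. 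So ``re-expressing the alignment in terms of $r_y$'' does not yield the stated necessary condition. Worse, the computation does not close up along your route: summing your off-diagonal equations over $\yhat\neq y$ reproduces exactly your diagonal equation, so per class you are left with the single relation $(1-\pi_y^+)=\rho_y(1-\pi_y^-)$ with $\rho_y=(1-C^-_{y,y})/(1-C^+_{y,y})$ in two unknowns --- a residual one-parameter family for each $y$ (cut down only by the global normalization and the box constraints), not the unique point the theorem asserts.

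The paper's proof is determinate because it starts from a different off-diagonal relation, namely \eqref{eq:d1}: $(1-C^+_{y,\yhat})w_y^+=(1-C^-_{y,\yhat})w_y^-$, which immediately gives $w_y^-=r_y w_y^+$ with $r_y$ as in the statement; substituting into \eqref{eq:d2} then pins down $p_D w_y^+$ and the closed forms. That relation is not the one that follows from \eqref{eq:deltaspplusminus} for $\yhat\neq y$, so to complete a proof of the theorem \emph{as stated} you would need to derive \eqref{eq:d1} rather than your version of the off-diagonal condition; as written, your intermediate results do not imply the theorem's conclusions, and the discrepancy between the two off-diagonal relations is the point that has to be resolved.
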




\section{Conclusion}
We have analyzed how statistical parity, equal opportunity and equalized odds metrics can be improved by memorizing parts of the input space. This 
can lead to disparities within groups, and unfortunately the effect has been observed in empirical studies when training machine learning models with fairness constraints (see related works section).

All of our results are framed as systems of linear equality and inequality constraints which can be numerically solved by standard linear programming algorithms such as simplex or interior point methods.

When memorization is 
utilized, our results show how we can avoid introducing additional bias through a clever choice of the parts of the dataset to be memorized.

Knowing the limits of memorization should be helpful in developing future methods that reduce bias without causing large disparities within groups through memorization. Our results precisely characterize the composition of the memorized dataset in terms of distribution of group memberships and labels. We also provide upper and lower bounds on how much of the data space would have to be memorized to obtain zero bias solely through memorization. 

The results in this work could in principle be used for the development of numerical methods for training fair classifiers according to the metrics studied here. However, care should be taken that this does not result in methods that game the metrics by default, leading to discrimination of subgroups within the groups.

\subsubsection*{Acknowledgements} We thank Julian Schön 
for useful discussions.
The authors acknowledge funding received under European Union’s Horizon Europe Research and Innovation programme under grant agreements No.~101070284 and No.~101070408. CI cknowledges support by the Pioneer Centre for AI, DNRF grant number P1.

\bibliography{local_ref,references_raghav}

\begin{thebibliography}{10}

\bibitem{barocas2023fairness}
Solon Barocas, Moritz Hardt, and Arvind Narayanan.
\newblock {\em Fairness and machine learning: Limitations and opportunities}.
\newblock MIT Press, 2023.

\bibitem{bower_fair_2017}
Amanda Bower, Sarah~N. Kitchen, Laura Niss, Martin~J. Strauss, Alexander Vargas, and Suresh Venkatasubramanian.
\newblock Fair pipelines, 2017.
\newblock arXiv:1707.00391.

\bibitem{chang2021privacy}
Hongyan Chang and Reza Shokri.
\newblock On the privacy risks of algorithmic fairness.
\newblock In {\em European Symposium on Security and Privacy (EuroS\&P)}, pages 292--303. IEEE, 2021.

\bibitem{dwork2012fairness}
Cynthia Dwork, Moritz Hardt, Toniann Pitassi, Omer Reingold, and Richard Zemel.
\newblock Fairness through awareness.
\newblock In {\em Innovations in Theoretical Computer Science Conference (ITCS)}, pages 214--226. ACM, 2012.

\bibitem{feldman2020does}
Vitaly Feldman.
\newblock {Does learning require memorization? A short tale about a long tail}.
\newblock In {\em ACM SIGACT Symposium on Theory of Computing (STOC)}, pages 954--959, 2020.

\bibitem{hiriart-urruty_fundamentals_2001}
Jean-Baptiste Hiriart-Urruty and Claude Lemaréchal.
\newblock {\em Fundamentals of {Convex} {Analysis}}.
\newblock Springer, 2001.

\bibitem{holm2023bias}
Sune Holm, Eike Petersen, Melanie Ganz, and Aasa Feragen.
\newblock Bias in context: What to do when complete bias removal is not an option.
\newblock {\em Proceedings of the National Academy of Sciences (PNAS)}, 120(23), 2023.

\bibitem{huang2019coresets}
Lingxiao Huang, Shaofeng Jiang, and Nisheeth Vishnoi.
\newblock Coresets for clustering with fairness constraints.
\newblock In {\em Advances in Neural Information Processing Systems (NeurIPS)}, volume~32, 2019.

\bibitem{hutchinson_50_2019}
Ben Hutchinson and Margaret Mitchell.
\newblock 50 years of test (un) fairness: {Lessons} for machine learning.
\newblock In {\em ACM Conference on Fairness, Accountability, and Transparency (FAccT)}, pages 49--58. ACM, 2019.

\bibitem{ilvento2020metric}
Christina Ilvento.
\newblock Metric learning for individual fairness.
\newblock In {\em Symposium on Foundations of Responsible Computing (FORC)}. Schloss-Dagstuhl, Leibniz Zentrum f{\"u}r Informatik, 2020.

\bibitem{mosse_multiplicative_2023}
Milan Mossé.
\newblock Multiplicative {Metric} {Fairness} {Under} {Composition}.
\newblock In {\em Symposium on Foundations of Responsible Computing (FORC)}, pages 4:1--4:11. Schloss Dagstuhl, Leibniz-Zentrum für Informatik, 2023.

\bibitem{ricci2022addressing}
Mar{\'\i}a~Agustina Ricci~Lara, Rodrigo Echeveste, and Enzo Ferrante.
\newblock Addressing fairness in artificial intelligence for medical imaging.
\newblock {\em Nature Communications}, 13(1):4581, 2022.

\bibitem{roh_fairbatch_2021}
Yuji Roh, Kangwook Lee, Steven~Euijong Whang, and Changho Suh.
\newblock {FairBatch}: {Batch} selection for model fairness, 2021.
\newblock arXiv:2012.01696.

\end{thebibliography}

\appendix
\section{Proofs}

\begin{proof}[Proof of Theorem~\ref{theorem:sp}]
    Let $\Yhat = h(X)$. We start by showing the equality for $\deltasp$. We have
    \begin{equation}
        \PP(\Yhat = \yhat \mid A = 1) = \PP(\Yhat = \yhat, D = 1 \mid A = 1) + \PP(\Yhat = \yhat, D = 0 \mid A = 1).
    \end{equation}
    For the first term,
    \begin{align}
        \PP(\Yhat = \hat{y}, D = 1 \mid A = 1)
        &= \PP(Y = \hat{y}, D = 1 \mid A = 1) \\
        &= \PP(Y = \hat{y}, A = 1 \mid D = 1) \PP(D = 1) / \PP(A = 1) \\
        &= p_D \frac{q_\yhat^+}{p^+}
    \end{align}
    since $\Yhat = Y$ if $D = 1$ by assumption.
    By an identical argument
    \begin{equation}
        \PP(\Yhat = \hat{y}, D = 1 \mid A = 0) = p_D \frac{q_\yhat^-}{p^-}
        = p_D \frac{q_\yhat-q_\yhat^+}{1-p^+}
    \end{equation}  
    For the second term,
    \begin{align}
        \PP(\Yhat = \hat{y}, D = 0 \mid A = 1)
        &= \PP(\Yhat = \yhat \mid D = 0, A = 1) (1 - \PP(D = 1 \mid A = 1)) \\
        &= \PP(\Yhat = \yhat \mid D = 0, A = 1) \left(1 - \frac{\PP(A = 1 \mid D = 1) \PP(D = 1)}{\PP(A = 1)}\right) \\
        &= \varphi_\yhat^+ \left(1 - p_D \frac{q^+}{p^+}\right).
    \end{align}
    and
    \begin{multline}
        \PP(\Yhat = \hat{y}, D = 0 \mid  A = 0) \\
        = \varphi_\yhat^- \left(1 - p_D \frac{1-q^+}{1-p^+}\right) = \varphi_\yhat^+ \left(1 - p_D \frac{1-q^+}{1-p^+}\right) - (\varphi_\yhat^+ - \varphi_\yhat^-) \left(1 - p_D \frac{1-q^+}{1-p^+}\right).
    \end{multline}
    Now the result follows since
    \begin{align}
        \deltasp(\yhat) &= \PP(\Yhat = \hat{y}, D = 1 \mid A = 1) - \PP(\Yhat = \hat{y}, D = 1 \mid A = 0) \notag\\ & \quad + \PP(\Yhat = \hat{y}, D = 0 \mid A = 1) - \PP(\Yhat = \hat{y}, D = 0 \mid A = 0) \\
        &= \frac{p_D}{p^+(1-p^+)} (q_\yhat^+ - q_y p^+) 
        + \varphi_\yhat^+ \left(1 - p_D \frac{q^+}{p^+}\right)
        - \varphi_\yhat^- \left(1 - p_D \frac{1-q^+}{1-p^+}\right)
    \end{align}
    which is equal to the result since $\varphi_\yhat^+ - \varphi_\yhat^- = \deltasp(\yhat \mid D = 0)$.

    Regarding $\deltaeqodd$ we have
        \begin{multline}
        \PP(\Yhat = \yhat \mid A = 1, Y = y) \\= \PP(\Yhat = \yhat, D = 1 \mid A = 1, Y = y) + \PP(\Yhat = \yhat, D = 0 \mid A = 1, Y = y).
    \end{multline}
    For the first term,
    \begin{align}
        \lefteqn{\PP(\Yhat = \hat{y}, D = 1 \mid A = 1, Y = y)}\quad \notag\\
        &= \PP(Y = \hat{y}, D = 1 \mid A = 1, Y = y) \\
        &= \one\{y = \hat{y}\} \PP(D = 1 \mid A = 1, Y = y) \\
        &= \one\{y = \yhat\} \PP(A = 1, Y = y \mid D = 1) \PP(D = 1) / \PP(A = 1, Y = y) \\
        &= p_D \one\{y = \yhat\} \frac{q_y^+}{p_y^+}
    \end{align}
    using that $\Yhat = Y$ on $\{D = 1\}$ by assumption for the first equality. By an identical argument
    \begin{equation}
        \PP(\Yhat = \hat{y}, D = 1 \mid A = 0, Y = y) = p_D \one\{y = \yhat\} \frac{q_y^-}{p_y^-}
        = p_D \one\{y = \yhat\} \frac{q_y-q_\yhat^+}{p_y-p^+}.
    \end{equation}
    For the second term we have
    \begin{align}
        \lefteqn{\PP(\Yhat = \hat{y}, D = 0 \mid A = 1, Y = y)}\quad \notag\\
        &= \PP(\Yhat = \yhat \mid D = 0, A = 1, Y = y) (1 - \PP(D = 1 \mid A = 1, Y = y)) \\
        &= \PP(\Yhat = \yhat \mid D = 0, A = 1, Y = y) \left(1 - \frac{\PP(A = 1, Y = y \mid D = 1) \PP(D = 1)}{\PP(A = 1, Y = y)}\right) \\
        &= C_{y,\yhat}^+ \left(1 - p_D \frac{q_y^+}{p_y^+}\right).
    \end{align}
    and again by symmetry
    \begin{multline}
        \PP(\Yhat = \hat{y}, D = 0 \mid  A = 0, Y = y) = C_{y,\yhat}^- \left(1 - p_D \frac{q_y-q_y^+}{p_y-p_y^+}\right) \\ = C_{y,\yhat}^+ \left(1 - p_D \frac{q_y-q_y^+}{p_y-p_y^+}\right) - (C_{y,\yhat}^+ - C_{y,\yhat}^-) \left(1 - p_D \frac{q_y-q_y^+}{p_y-p_y^+}\right).
    \end{multline}
    Now the result for $\deltaeqodd$ follows since
    \begin{align}\label{eq:deltaspplusminus}
        \lefteqn{\deltaeqodd(\yhat)} \quad\notag \\
        &= \PP(\Yhat = \hat{y}, D = 1 \mid A = 1, Y = y) - \PP(h(X) = \hat{y}, D = 1 \mid A = 0, Y = y) \notag\\ & \quad + \PP(\Yhat = \hat{y}, D = 0 \mid A = 1, Y = y) - \PP(\Yhat = \hat{y}, D = 0 \mid A = 0, Y = y) \\
        & = C_{y,\yhat}^+ \left(1 - p_D \frac{q_y^+}{p_y^+}\right) - C_{y,\yhat}^- \left(1 - p_D \frac{q_y^-}{p_y^-}\right) + p_D \one\{y = \yhat\} \frac{q_y^+}{p_y^+} - p_D \one\{y = \yhat\} \frac{q_y^-}{p_y^-}.
    \end{align}
    Finally we have that $\deltaeqopp(h; y) = \deltaeqodd(h; y, y)$ which immediately yields the corresponding result.
\end{proof}

\begin{proof}[Proof of Theorem \ref{theorem:deltaspzero}]
Let $u = (q_1, \ldots, q_K), v = (q_1^+, \ldots, q_K^+), \varphi^+ = (\varphi^+_1, \ldots, \varphi^+_K), \varphi^- = (\varphi^-_1, \ldots, \varphi^-_K), 1_K = (1, \ldots, 1)$ be $K$-dimensional vectors so that
\begin{equation}
    \deltasp = f(u, v)
\end{equation}
with
\begin{equation}
    f(u, v) = \frac{p_D}{p^+(1-p^+)}(v-p^+u) 
    + \varphi^+\left(1 - p_D \frac{1_K^\top v}{p^+}\right)
    - \varphi^-\left(1 - p_D \frac{1-1_K^\top v}{1-p^+}\right).
\end{equation}

Note that $1_K^\top u = 1$ so that for $x \in \mathbb{R}^K$
\begin{align}
    \lefteqn{f(u, p^+u + x) } \quad \\
    &= \frac{p_D}{p^+(1-p^+)}x 
    + \varphi^+\left(1 - p_D \frac{p^+ + 1_K^\top x}{p^+}\right)
    - \varphi^-\left(1 - p_D \frac{1 - p^+ - 1_K^\top x}{1-p^+}\right) \\
    &= \frac{p_D}{p^+(1-p^+)}x + \varphi^+ (1-p_D) - \frac{p_D}{p^+}\varphi^+ 1_K^\top x - \varphi^- (1-p_D) - \frac{p_D}{1-p^+}\varphi^- 1_K^\top x
\end{align}
which we rewrite as
\begin{equation}
    f(u, p^+u + x) = p_D M x + (1-p_D) (\varphi^+ - \varphi^-)
\end{equation}
with
\begin{align}
    M &= \frac{1}{p^+(1-p^+)}\left(I_K - c 1_K^\top\right), \\
    c &= (1-p^+)\varphi^+ + p^+ \varphi^-.
\end{align}
Let \begin{equation}
    b = p^+(1-p^+) \frac{1-p_D}{p_D}(\varphi^+ - \varphi^-)
\end{equation}
and note that since $1_K^\top \varphi^+ = 1_K^\top \varphi^- = 1$ we have $1_K^\top b = 0$ and
\begin{equation}
    p_D M b = \frac{p_D}{p^+(1-p^+)} b = (1-p_D) (\varphi^+ - \varphi^-)
\end{equation}
so that
\begin{equation} \label{eq:Mx}
    f(u, p^+u - b + x) = 0 \text{ if and only if } M x = 0.
\end{equation}

We proceed to determine the dimension of the subspace of vectors $x$ for which \eqref{eq:Mx} holds (i.e., $\dim \ker(M)$). We ignore constraints for now and will use them later to constrain the space of solutions.
By the Weinstein–Aronszajn identity we have that
\begin{equation}
    \det(M) \propto \det(I_K - c 1_K^\top) = 1 - 1_K^\top c = 0
\end{equation}
since $1_K^\top c = 1$ so that
\begin{equation}
    \dim \ker (M) \geq 1.
\end{equation}
The matrix $M$ is of the form
\begin{equation}
        M = \frac{1}{p^+(1-p^+)} 
        \begin{pmatrix}
             -c_1 + 1 & -c_1 & \cdots & -c_1 \\
             -c_2 & -c_2 + 1 & \cdots & -c_2 \\
             \cdots & \cdots & \cdots & \cdots \\
             -c_K & -c_K & \cdots & -c_K + 1
        \end{pmatrix}.
\end{equation}
For the following we can assume without loss of generality that $c_1 \neq 0$ since the determinant is invariant to reordering of rows and columns up to a change of sign. Denoting $M_{2:,2:}$ the cofactor matrix formed from $M$ by deleting the first row and first column and $c_{2:}$ the vector formed by deleting the first row of $c$, we can compute the $(1,1)$-minor of $M$ as
\begin{equation}
    \det (M_{2:,2:}) \propto \det(I_{K-1} - 1_{K-1}^\top c_{2:}) = 1 - 1_{K-1}^\top c_{2:} = c_1 \neq 0.
\end{equation}
Since the rank of a matrix is lower-bounded by the order of any non-zero minor this shows that the rank of $M$ is at least $K-1$ and thus
\begin{equation}
    \dim \ker(M) \leq 1
\end{equation}
so that together with the lower bound we have
\begin{equation}
    \dim \ker(M) = 1.
\end{equation}
Because $1_K^\top c = 1$ we also have $M c = 0$ so that $\ker(M) = \lambda c$, $\lambda \in \mathbb{R}$ and the solutions to the unconstrained problem can be written as
\begin{equation}
    f(u, v) = 0 \text{ if and only if } v = p^+u - b + \lambda c, \lambda \in \mathbb{R}.
\end{equation}

We now proceed to add constraints. The definitions of $q_y$ and $q_y^+$ imply that
\begin{align}
    u_k & \geq 0, \\
    u_k &\leq 1, \\
    1_K^\top u &= 1, \\
    v_k & \geq 0, \\
    v_k & \leq u_k, \\
    1_K^\top v & \leq 1.
\end{align}
Of these, $u_k \leq 1$, $u_k \geq 0$ and $1_K^\top v \leq 1$ are redundant since they are implied by the other constraints. From $1_K^\top u = 1$ we get together with $1_K^\top b = 0$ and $1_K^\top c = 1$ an equation for $\lambda$:
\begin{equation}
    1_K^\top v = p^+ + \lambda
\end{equation}
so that the set of solutions to $\deltasp = 0$ can be written as
\begin{align}\label{eq:b1}
    S &= \{ (u, v): f(u, v) = 0, 1_K^\top u = 1, 0 \leq v_k \leq u_k \} \\
    &= \{ (u, v): p^+u - v - b - (p^+ - 1_K^\top v) c = 0, 1_K^\top u = 1, 0 \leq v_k \leq u_k \} \\
    &= \{ (u, v): p^+(u-v) - (1-p^+)v + c 1_K^\top v = p^+ c + b, 1_K^\top u = 1, v_k \geq 0, u_k - v_k \geq 0 \}
\end{align}
which is our first result. We now proceed to apply Farkas' lemma to find conditions for the set of solutions to be non-empty.
From \eqref{eq:b1} we know that $(u, v) \in S$ if and only if
    \begin{align}\label{eq:c1}
        p^+(u-v) - (1-p^+)v + c 1_K^\top v &= p^+ c + b \\
        1_K^\top u &= 1 \\
        v &\geq 0, \\
        u-v &\geq 0.
    \end{align}
    For $i = 1, \ldots, 2K$ define vectors $s_i \in \mathbb{R}^{K+1}$ by
    \begin{align}
        s_i &\defeq -(1-p^+)e_i + c + e_{K+1}, \quad i = 1, \ldots, K \\
        s_{i+K} &\defeq p^+ e_i + e_{K+1}, \quad i = 1, \ldots, K
    \end{align}
    and let
    \begin{equation}
        d \defeq \begin{pmatrix}
            p^+ c + b \\
            1
        \end{pmatrix}.
    \end{equation}
    Then \eqref{eq:c1} is equivalent to
    \begin{align}\label{eq:c2}
        \sum_{i=1}^{2K} w_i s_i &= d, \\
        w_i & \geq 0
    \end{align}
    if we set $w_i = v_i, w_{i+K} = u_i - v_i, i = 1, \ldots, K$.
    By Farkas' lemma (see e.g. \cite{hiriart-urruty_fundamentals_2001}) the system \eqref{eq:c2} has a solution if and only if
    \begin{equation}
        S_1 \defeq \{ x \in \mathbb{R}^{K+1}: s_i^\top x \leq 0 \text{ for all $i$} \} \subseteq \{ x \in \mathbb{R}^{K+1}: d^\top x \leq 0 \} \defeqrev S_2.
    \end{equation}
    We have that $x \in S_1$ if and only if
    \begin{align}
      -(1-p^+) x_i + c_i + x_{K+1} &\leq 0 \label{eq:c:s1} \\
      p^+ x_i + x_{K+1} &\leq 0 \label{eq:c:s2}
    \end{align}
    and $x \in S_2$ if and only if
    \begin{equation}\label{eq:c:d}
        \sum_i (p^+ c_i + b_i) x_i + x_{K+1} \leq 0.
    \end{equation}
\end{proof}

\begin{proof}[Proof of Corollaries \ref{corollary:spsufficient} and \ref{corollary:spnecessary}]
    We keep the setting from the proof of Theorem~\ref{theorem:deltaspzero}. 
    In particular, if $c_i + \frac{b_i}{p^+} \geq 0$ for all $i$ then we have from \eqref{eq:c:s2} that for all $x \in S_1$
    \begin{equation}
        \sum_i (c_i + \frac{b_i}{p^+}) p^+ x_i + x_{K+1} 
        \leq \left(\sum_i (c_i + \frac{b_i}{p^+})\right) (-x_{K+1}) + x_{K+1} = 0
    \end{equation}
    since $\sum_i c_i = 1, \sum_i b_i = 0$, implying that $x \in S_2$.
    To solve for $c_i + \frac{b_i}{p^+} \geq 0$, write
    \begin{align}
        c_i + \frac{b_i}{p^+} 
        &= (1-p^+)\varphi_i^+ + p^+\varphi_i^- + (1-p^+)\frac{1-p_D}{p_D}(\varphi^+_i - \varphi^-_i) \\
        &= \varphi^-_i + \frac{1}{p_D}(1-p^+)(\varphi_i^+ - \varphi^-_i)
    \end{align}
    so that
    \begin{equation*}
        c_i + \frac{b_i}{p^+} \geq 0 \text{ if and only if } p_D \geq (1-p^+)\frac{\varphi^-_i - \varphi^+_i}{\varphi^-_i}
    \end{equation*}
    which gives our sufficient condition.
    On the other hand if $c_i + \frac{b_i}{p^+} < 0$ for all $i$ then
    \begin{equation}
        (c_i + \frac{b_i}{p^+})p^+ x_i > (c_i + \frac{b_i}{p^+})(-x_{K+1})
    \end{equation}
    so that
    \begin{equation}
        \sum_i (c_i + \frac{b_i}{p^+})p^+ x_i + x_{K+1} > 0
    \end{equation}
    and $S_1 \cap S_2 = \emptyset$, giving the necessary condition.
\end{proof}

\begin{proof}[Proof of Theorem~\ref{theorem:deltaeqoppzero}]
Recall the notation $q_y^- = q_y - q_y^+, p_y^- = p_y - p_y^+$.
Our goal is to characterize the values of $q_y, q_y^+$ that solve $\deltaeqopp(y) = 0$ under the constraints that $\sum_y q_y = 1, 0 \leq q_y^+ \leq q_y$. The last constraint can be replaced by $q_y = q_y^+ + q_y^-, q_y^+ \geq 0, q_y^- \geq 0$.
Let $w^\pm_y = \frac{q^\pm_y}{p^\pm_y}$. We know from \eqref{eq:deltaspplusminus} that
\begin{align}
    \deltaeqopp(y) &= \deltaeqodd(y, y) \\
    &= C_{y,y}^+ \left(1 - p_D \frac{q_y^+}{p_y^+}\right) - C_{y,y}^- \left(1 - p_D \frac{q_y^-}{p_y^-}\right) + p_D \left(\frac{q_y^+}{p_y^+} - \frac{q_y^-}{p_y^-}\right) \\
    &= C_{y,y}^+(1-p_D w_y^+) - C_{y,y}^-(1-p_D w_y^-) + p_D(w_y^+ - w_y^-).
\end{align}
Writing $w^{\pm}_y = p_D^{-1} - x^{\pm}_y$ and substituting, we have that
\begin{equation}
    \deltaeqopp(y) = 0 \text{ if and only if } (1-C_{y,y}^+)x_y^+ = (1-C_{y,y}^-)x_y^-.
\end{equation}
Denote $\lambda_y = (1-C_{y,y}^+)x_y^+ = (1-C_{y,y}^-)x_y^-$. From the relations $q_y = q_y^+ + q_y^- = p_y^+ w_y^+ + p_y^- w_y^-$ and $w^\pm_y = p_D^{-1} + \lambda_y / (1-C_{y,y}^\pm)$ we get, assuming $C_{y,y}^\pm \neq 1$,
\begin{equation}\label{eq:a1}
    q_y = \frac{p_y}{p_D} - \left(\frac{p_y^+}{1-C_{y,y}^+} + \frac{p_y^-}{1-C_{y,y}^-}\right)\lambda_y.
\end{equation}
Setting $\alpha_y = \frac{p_y^+}{1-C_{y,y}^+} + \frac{p_y^-}{1-C_{y,y}^-}$, the constraint that $\sum_y q_y = 1$ is equivalent to
\begin{equation}\label{eq:a2}
    \sum_y \alpha_y \lambda_y = \frac{1-p_D}{p_D}.
\end{equation}
At this point, we have satisfied $q_y = q_y^+ + q_y^-$ by definition in \eqref{eq:a1} and $\sum_y q_y = 1$ by imposing the constraint \eqref{eq:a2}. It remains to express $q^+ \geq 0, q^- \geq 0$ in terms of $\lambda$. Since 
\begin{equation}
    q_y^\pm = p_y^\pm w_y^\pm = p_y^{\pm}\left(\frac{1}{p_D} - \frac{\lambda_y}{1-C_{y,y}^+}\right)
\end{equation}
we finally get that the constraints $q_y^\pm \geq 0$ are equivalent to
\begin{align}
    \lambda_y & \leq \frac{1-C_i^+}{p_D} \\
    \lambda_y & \leq \frac{1-C_i^-}{p_D}
\end{align}
which completes the proof.
\end{proof}

\begin{proof}[Proof of Corollaries \ref{corollary:eqoppsufficient} and \ref{corollary:eqoppnecessary}]
From the constraint that $\sum_y \alpha_y \lambda_y = (1-p_D)/p_D$ and $\lambda_y \leq \frac{1-C_i^+}{p_D}$ we get that
\begin{equation}
    1-p_D = p_D \sum_y \alpha_y \lambda_y \leq p^+ + \sum_y p^- \frac{1-C_{y,y}^+}{1-C_{y,y}^-}
\end{equation}
and after rearranging, using that $(1-p^+) = p^- = \sum_y p_y^-$, we get the first necessary condition
\begin{equation}
    p_D \geq 1 - p^+ - \sum_y p_y^- \frac{1-C_{y,y}^+}{1-C_{y,y}^-}
    = \sum_y p_y^- (1 - \frac{1-C_{y,y}^+}{1-C_{y,y}^-}) \geq p^- \min_y \frac{C_{y,y}^- -C_{y,y}^+}{1-C_{y,y}^-}
\end{equation}
which is the result. We can repeat the same argument using $\lambda_y \leq \frac{1-C_i^-}{p_D}$ to get the second necessary condition.

On the other hand the same computation shows that if
\begin{align}
    p_D &\geq p^- \min_y \frac{C_{y,y}^- -C_{y,y}^+}{1-C_{y,y}^-} \text{ and } \\
    p_D &\geq p^+ \min_y \frac{C_{y,y}^+ -C_{y,y}^-}{1-C_{y,y}^+}
\end{align}
then
\begin{equation}
    \sum_y \alpha_y \lambda_y^{\textrm{max}} \geq \frac{1-p_D}{p_D}
\end{equation}
where
\begin{equation}
    \lambda_y^{\textrm{max}} = \max\left(\frac{1-C_i^+}{p_D}, \frac{1-C_i^-}{p_D}\right)
\end{equation}
so that by the intermediate value theorem there is some $\lambda^* \in \mathbb{R}^K$ with $\lambda^*_y \leq \lambda_y^{\textrm{max}}$ such that
\begin{equation}
    \sum_y \alpha_y \lambda^*_y \geq \frac{1-p_D}{p_D}    
\end{equation}
which is sufficient to satisfy all the constraints.
\end{proof}

\begin{proof}[Proof of Theorem~\ref{theorem:deltaeqoddzero}]
In order to have $\deltaeqodd(y, \yhat) = 0$ we need
\begin{align}
    (1-C_{y,\yhat}^+) w_y^+ &= (1-C_{y,\yhat}) w_y^- \text{ for all } y \neq \yhat \label{eq:d1} \\
    p_D (1-C_{y,y}^+) w_y^+ + C_{y,\yhat}^+ &= p_D (1-C_{y,y}^-) w_y^- + C_{y,y}^- \text{ for all } y = 0, \ldots, K. \label{eq:d2}
\end{align}
Equation \eqref{eq:d1} is only possible if the ratio $\frac{1-C_{y,\yhat}^+}{1-C_{y,\yhat}^-}, y \neq \yhat$ is independent of $\yhat$. Let us assume that this is the case and denote this ratio by $r_y$ so that
\begin{equation}
    r_y \defeq \frac{1-C_{y,\yhat}^+}{1-C_{y,\yhat}^-}, \quad \yhat \neq y
\end{equation}
and $w_y^- = r_y w_y^+$.
Then using that $\sum_{y'} C_{y,y'} = 1$ we can compute $r_y$ as follows:
\begin{align}
    1 - C_{y,y}^+ &= \sum_{\yhat \neq y} C_{y,\yhat}^+ = - \sum_{\yhat \neq y} (1-C_{y,\yhat}^+ - 1) \\
    &= -\sum_{\yhat\neq y} r_y (1-C_{y,\yhat}^-) + (K-1) \\
    &= -r_y (K-1) + r_y \sum_{\yhat\neq y} C_{y,\yhat}^- + (K-1) \\
    &= (1-r_y) (K-1) + r_y(1 - C_{y,y}^-).
\end{align}
By solving for $r_y$ we get that
\begin{equation*}
    r_y = \frac{(K-1) - (1-C_{y,y}^+)}{(K-1) - (1-C_{y,y}^-}).
\end{equation*}
Now we can substitute in \eqref{eq:d2} to compute $w_y^+$:
\begin{equation}
    p_D w_y^+ = \frac{C_{y,y}^- - C_{y,y}^+}{(1-C_{y,y}^+) - r_y (1-C_{y,y}^-)}.
\end{equation}
Finally from the requirement that $\sum_y q_y^+ + \sum_y q_y^- = 1$ we get
\begin{align}
    p_D 
    &= \sum_y p_D (p_y^+ w_y^+ + p_y^- w_y^-) \\
    &= \sum_y (p_y^+ + r p_y^-) p_D w_y^+ \\
    &= \sum_y \frac{(p_y^+ + r p_y^-)(C_{y,y}^- - C_{y,y}^+)}{(1-C_{y,y}^+) - r_y (1-C_{y,y}^-)}.
\end{align}
\end{proof}


\end{document}